\newtheorem{assumption}{Assumption}
\newtheorem{lemma}{Lemma}
\newtheorem{theorem}{Theorem}
\title{\LARGE \bf
BiCQL-ML: A Bi-Level Conservative Q-Learning Framework for Maximum Likelihood Inverse Reinforcement Learning
}
\author{Junsung Park$^{1}$%
\\$^{1}$Department of Electrical and Computer Engineering, Seoul National University}
\begin{document}

\maketitle
\thispagestyle{empty}
\pagestyle{empty}

%%%%%%%%%%%%%%%%%%%%%%%%%%%%%%%%%%%%%%%%%%%%%%%%%%%%%%%%%%%%%%%%%%%%%%%%%%%%%%%%
\begin{abstract}

Offline inverse reinforcement learning (IRL) aims to recover the underlying reward function that explains expert behavior using only static demonstration data, without access to online environment interaction. We propose BiCQL-ML, a novel policy-free offline IRL algorithm that jointly optimizes a reward function and a conservative Q-function in a bi-level framework, thereby entirely avoiding explicit policy learning. Specifically, our method alternates between (i) learning a conservative Q-function via Conservative Q-Learning (CQL) under the current reward, and (ii) updating the reward parameters to maximize the expected Q-values of expert actions while mitigating over-generalization to out-of-distribution (OOD) actions. This procedure can be interpreted as a maximum likelihood estimation (MLE) framework under soft value matching. We establish theoretical guarantees that BiCQL-ML converges to a reward function under which the expert policy is soft-optimal. Empirically, our approach consistently outperforms competitive baselines (BC, DAC, ValueDice) across MuJoCo control tasks and pre-collected datasets from the D4RL benchmark suite, demonstrating both the effectiveness and robustness of the proposed offline IRL framework.

\end{abstract}

%%%%%%%%%%%%%%%%%%%%%%%%%%%%%%%%%%%%%%%%%%%%%%%%%%%%%%%%%%%%%%%%%%%%%%%%%%%%%%%%
\section{Introduction}

Designing appropriate reward functions is one of the most fundamental and persistent challenges in reinforcement learning (RL), especially in complex real-world domains where the desired behaviors are difficult to specify analytically. Inverse Reinforcement Learning (IRL)~\cite{Abbeel2004,Fu2018} provides a principled framework to infer the underlying reward function from expert demonstrations, thereby bypassing the need for manual reward engineering.

Recently, offline IRL~\cite{Rath2021,ValueDice} has gained traction due to its practical importance in real-world scenarios, where interaction with the environment is often costly, risky, or infeasible. Offline IRL aims to recover reward functions solely from static datasets collected a priori, without requiring online sampling. However, learning reliable rewards in this setting introduces several significant challenges:

\textbf{Out-of-Distribution (OOD) Generalization:} Offline datasets only cover a subset of the full state-action space. Existing IRL algorithms may extrapolate poorly outside this distribution, often overestimating rewards for unseen actions, resulting in misaligned or unsafe learned behaviors.

\textbf{Reward Ambiguity:} Without additional structure or regularization, multiple reward functions may explain the same expert behavior, leading to ill-posed inference and non-unique solutions.

\textbf{Policy-Dependence and Computational Cost:} Many IRL methods rely on explicitly learning or optimizing a policy during training. For instance, MaxEnt IRL~\cite{Ziebart2008} and AIRL~\cite{Fu2018} perform repeated policy optimization, which is expensive and unstable in high-dimensional offline settings.

While recent offline IRL methods have attempted to address these challenges, they each come with notable limitations. AIRL~\cite{Fu2018} and GAIL~\cite{Ho2016} fundamentally rely on interactive rollouts and adversarial training, rendering them incompatible with purely offline RL settings. These methods require on-policy sampling from the environment during training, which violates the core assumption of offline learning where no additional data collection is permitted. ValueDICE~\cite{ValueDice} reformulates IRL as a density ratio estimation problem, but still depends on policy modeling and suffers in the presence of severe distributional shift, as density ratio estimation becomes unreliable for out-of-distribution state-action pairs. Offline ML-IRL~\cite{Zeng2023} presents a maximum-likelihood framework for reward inference, yet its use of generative world models and policy evaluation steps results in increased structural complexity and computational overhead. These additional components make the method less stable and harder to scale to high-dimensional tasks compared to more lightweight alternatives.

Our key insight is that explicit policy optimization is not necessary for accurate reward inference in the offline setting. Instead, by focusing solely on conservative value estimation, we can infer reward functions that are both robust and generalizable, even in the presence of severe out-of-distribution (OOD) shifts. Crucially, our approach operates entirely within the offline regime: it never requires interaction with the environment or rollout-based policy updates.

\vspace{0.5em}
In this paper, we propose a novel offline IRL framework, BiCQL-ML, that entirely bypasses policy estimation and instead alternates between two key steps:
\begin{enumerate}
    \item \textbf{Conservative Q-function estimation:} We adopt Conservative Q-Learning (CQL)~\cite{Kumar2020} to learn a value function that penalizes overestimation in OOD regions using the current reward function. This reduces the risk of reward misalignment due to value extrapolation.
    \item \textbf{Reward function update via Maximum Likelihood:} We then update the reward parameters to maximize the likelihood of expert actions under the current Q-values using a soft Boltzmann policy. This ensures that expert actions are assigned higher values than alternatives.
\end{enumerate}

\vspace{0.5em}
\textbf{Our contributions are summarized as follows:}
\begin{itemize}
    \item We propose the offline IRL framework based on conservative value learning and maximum likelihood reward inference.
    \item Our algorithm eliminates the need for adversarial training, explicit policy optimization, or world models—improving simplicity, stability, and scalability.
    \item We provide theoretical guarantees for convergence and expert optimality supported by rigorous analysis.
    \item We empirically demonstrate that our method outperforms prior IRL methods (BC, DAC, ValueDICE) on MuJoCo-based D4RL benchmarks in both reward alignment and robustness.
\end{itemize}

%%%%%%%%%%%%%%%%%%%%%%%%%%%%%%%%%%%%%%%%%%%%%%%%%%%%%%%%%%%%%%%%%%%%
\section{Preliminaries and Problem formulation}

\subsection{Conservative Q-Learning}

Conservative Q-Learning (CQL)~\cite{Kumar2020} is an offline reinforcement learning (RL) algorithm that addresses the overestimation of Q-values for out-of-distribution (OOD) actions. In standard Q-learning, value functions may assign high values to actions not present in the dataset, leading to poor policy performance in offline settings. CQL mitigates this issue by incorporating a conservative penalty into the objective, explicitly pushing down Q-values of unseen or rarely observed actions.

Formally, CQL augments the standard Bellman error minimization objective with a regularization term that penalizes high Q-values for actions sampled from a broader distribution $\mu(a|s)$ (e.g., uniform or policy-induced), while maintaining high values for actions in the dataset:

\begin{align}
\min_Q \ &\underbrace{\mathbb{E}_{(s,a)\sim \mathcal{D}}\left[\left(Q(s,a) - \mathcal{B} Q(s,a)\right)^2\right]}_{\text{Bellman Error}} \notag \\
&+ \alpha \cdot \underbrace{\left(\mathbb{E}_{s\sim \mathcal{D}, a\sim \mu}\left[Q(s,a)\right] - \mathbb{E}_{(s,a)\sim \mathcal{D}}\left[Q(s,a)\right]\right)}_{\text{Conservative Penalty}}
\end{align}

where $\mathcal{B}Q(s,a)$ denotes the Bellman backup and $\alpha > 0$ controls the strength of the penalty. This encourages learning a conservative Q-function that underestimates values for unobserved actions, promoting safer policy improvement in the offline regime.

\subsection{Maximum Likelihood-based Inverse Reinforcement Learning}

Offline Maximum Likelihood Inverse Reinforcement Learning (Offline ML-IRL)~\cite{Zeng2023} formulates reward learning as a bi-level maximum likelihood estimation (MLE) problem. 

To mitigate the challenges posed by distributional shift and model uncertainty inherent in offline settings, Offline ML-IRL first constructs a conservative dynamics model $\hat{P}(s'|s,a)$ from static transition data. To penalize unreliable or out-of-distribution (OOD) state-action pairs, an uncertainty-based regularizer $U(s,a)$ is introduced. The learned policy is also encouraged to maintain high entropy through an entropy regularization term $\mathcal{H}(\pi(\cdot|s))$, promoting robustness and stochastic exploration.

This approach leads to a bi-level optimization framework:
\begin{align}
    \max_\theta \; \widehat{L}(\theta), \quad \text{s.t.} \quad \pi_\theta := \arg\max_\pi \; \mathbb{E}_{\tau \sim (\eta, \pi, \hat{P})}\\
    \Bigg[ \sum_{t=0}^{\infty} \gamma^t \big( 
    r(s_t, a_t; \theta) + U(s_t, a_t) \notag
    & + \mathcal{H}(\pi(\cdot|s_t)) \big) \Bigg]
\end{align}

Where, $\widehat{L}(\theta)$ denotes the surrogate log-likelihood of expert trajectories under the induced policy $\pi_\theta$, and $r(s,a;\theta)$ represents the reward function parameterized by $\theta$. The term $\hat{P}$ refers to the transition model learned from offline data, while $U(s,a)$ serves as a penalty function that discourages the selection of uncertain or poorly supported state-action pairs. The entropy regularizer $\mathcal{H}(\pi(\cdot|s))$ is included to promote stochasticity and robustness in the learned policy. The algorithm alternates between updating the reward parameters $\theta$ to maximize expert likelihood, and optimizing the policy $\pi_\theta$ via conservative RL under the current reward.

\subsection{Problem Formulation}
Our objective is to learn a reward function \( r_\theta(s, a) \) such that the induced policy \( \pi_\theta(a \mid s) \) assigns the highest likelihood to the expert's actions \( a_t \) given the observed states \( s_t \). Formally, we aim to maximize the likelihood of the expert trajectories under the learned policy:

\[
\max_\theta \; \mathbb{E}_{(s_t, a_t) \sim \mathcal{D}_E} \left[ \log \pi_\theta(a_t \mid s_t) \right]
\]

We formalize the offline inverse reinforcement learning (IRL) objective as a bi-level optimization problem, where the upper-level optimizes a reward function that maximizes the likelihood of expert behavior, and the lower-level solves a conservative Q-learning problem conditioned on that reward:

\begin{equation}
\max_{\theta} \ \mathcal{L}(\theta) \quad \text{subject to} \quad Q_\phi = \arg\min_{Q} \ \mathcal{L}_{\text{CQL}}(Q; \theta, \mathcal{D}).
\end{equation}

Here, $\mathcal{L}(\theta)$ denotes the expected log-likelihood of expert demonstrations under a soft policy $\pi_Q(a|s) \propto \exp(Q_\phi(s,a))$, and $\mathcal{L}_{\text{CQL}}$ represents the conservative Q-learning loss that estimates the Q-function under the current reward $r_\theta(s,a)$. The optimization is performed over the reward parameters $\theta$ using expert demonstrations $\mathcal{D}_E = \{(s_i, a_i)\}_{i=1}^{N}$, while the Q-function is learned using offline transition data $\mathcal{D} = \{(s, a, s')\}$.

Our method alternates between reward learning and conservative Q-function learning. This bi-level structure allows the reward to be inferred without explicit policy learning, while ensuring that value estimates remain reliable even without environment interaction. An overview of the full optimization framework is illustrated in Figure 1. The final outputs are an optimized reward function that explains the expert demonstrations, and a conservative Q-function that can be used for policy evaluation or control in downstream tasks.

\begin{figure}
    \centering
    \includegraphics[scale=0.45]{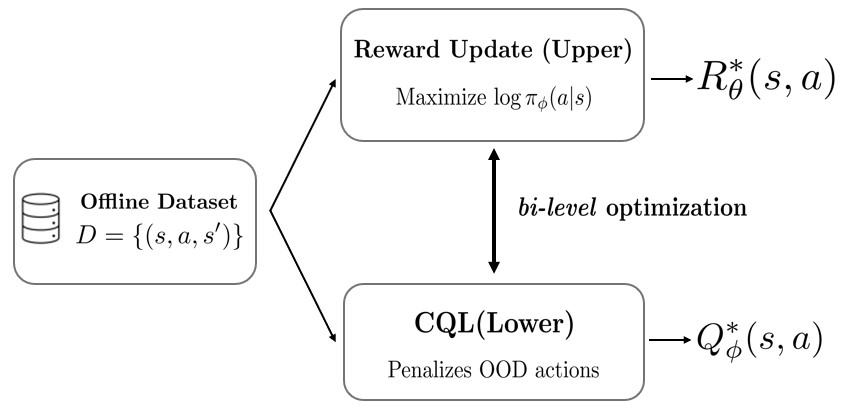}
    \caption{Bi-level offline IRL algorithm: the lower level uses Conservative Q-Learning (CQL) to learn a conservative Q-function $Q^*_\phi(s,a)$, while the upper level updates the reward $R^*_\theta(s,a)$ to maximize expert likelihood under the induced Boltzmann policy. Both components are trained iteratively using only offline data.}
\end{figure}

%%%%%%%%%%%%%%%%%%%%%%%%%%%%%%%%%%%%%%%%%%%%%%%%%%%%%%%%%%%%%%%%%%%%%%%%%%%%%%%%%%%%
\section{Proposed Method}
In this section, we propose a new offline Inverse Reinforcement Learning (IRL) algorithm based on the Maximum Likelihood framework. Instead of directly learning a policy, our method alternates between optimizing a Q-function $Q_\phi(s,a)$ and a reward function $R_\theta(s,a)$, both represented by neural networks. The objective is to (i) maximize the likelihood of expert demonstrations and (ii) conservatively evaluate the Q-values using static offline data.

\subsection{Upper-Level Optimization – Reward Learning via Expert Likelihood}

In the upper-level optimization stage, we update the reward function $r_\theta(s,a)$ to increase the likelihood of expert actions under the soft Boltzmann policy induced by the Q-function $Q_\phi(s,a)$. 

We assume the expert follows a soft policy of the Boltzmann form:
\begin{equation}
    \pi_\phi(a|s) = \frac{\exp(Q_\phi(s,a))}{\sum_{a'} \exp(Q_\phi(s,a'))}
\end{equation}
Given an expert state-action pair $(s_e, a_e) \sim D_E$, the log-likelihood of the expert action under this policy is:
\begin{equation}
    \log \pi_\phi(a_e|s_e) = Q_\phi(s_e, a_e) - \log \sum_{a} \exp(Q_\phi(s_e, a))
    \label{eq:loglik}
\end{equation}

To make expert behavior more likely, we maximize the expected log-likelihood over expert demonstrations, resulting in the following reward objective:
\begin{equation}
    L_R(\theta) = \mathbb{E}_{(s_e,a_e) \sim D_E} \left[ Q_\phi(s_e, a_e) - \log \sum_{a} \exp(Q_\phi(s_e, a)) \right]
    \label{eq:reward_objective}
\end{equation}

While this objective reflects the standard maximum likelihood estimation (MLE) of expert behavior under a soft Boltzmann policy, it becomes non-informative for reward learning when the Q-function \( Q_\phi \) is fixed. Specifically, since \( Q_\phi \) is pre-trained and held constant during reward optimization, the gradient of the log-likelihood with respect to the reward parameters \( \theta \) vanishes:

\begin{equation}
\nabla_\theta \mathcal{L}_R(\theta) = 0.
\end{equation}

To address this, we reinterpret the MLE objective as a surrogate signal and adopt an indirect reward learning approach. In particular, we observe that the quantity
\[
Q_\phi(s,a) - \gamma \cdot \log \sum_{a'} \exp(Q_\phi(s', a'))
\]
serves as a soft advantage function under the current Q-function. Motivated by this, we formulate reward learning as a regression problem that aligns the reward function \( r_\theta(s,a) \) with this soft advantage. Specifically, we minimize the following squared loss:

\begin{align}
\mathcal{L}_r(\theta) = \; & \mathbb{E}_{(s,a,s') \sim \mathcal{D}} \Bigg[ \Bigg( r_\theta(s,a) \nonumber \\
& - \left( Q_\phi(s,a) - \gamma \cdot \log \sum_{a'} \exp(Q_\phi(s',a')) \right) \Bigg)^2 \Bigg]
\label{eq:reward_regression}
\end{align}

where \( \mathcal{D} \) denotes the offline dataset of transitions. This formulation enables gradient-based optimization of \( \theta \), allowing the reward function to approximate the soft advantage values implied by the current Q-function. We update \( \theta \) via stochastic gradient descent:
\begin{equation}
\theta \leftarrow \theta - \eta_r \cdot \nabla_\theta \mathcal{L}_r(\theta),
\end{equation}
where \( \eta_r \) denotes the learning rate for the reward network.

 Although the reward is not directly optimized via the MLE gradient, this surrogate regression target preserves the original objective’s structure by emphasizing higher rewards for expert-preferred actions and lower rewards for alternatives. As such, it provides a practical and stable mechanism for reward learning in the presence of a fixed Q-function.

\subsection{Lower-Level Optimization – Conservative Q-Function Learning}

At the lower level of our IRL framework, the Q-function $Q_\phi$ is updated to conservatively evaluate the current reward function $R_\theta$ using offline data. We adopt Conservative Q-Learning (CQL)~\cite{Kumar2020}, extending it to a maximum-entropy (soft-RL) setting. This modification replaces the standard Bellman operator with a soft Bellman operator, incorporating an entropy term to encourage diverse and robust policy behaviors.

The Q-function parameters $\phi$ are optimized by minimizing the following total loss:
\begin{equation}
    \mathcal{L}_Q(\phi) = \mathcal{L}_{\mathrm{BE}}(\phi) + \mathcal{L}_{\mathrm{CQL}}(\phi)
    \label{eq:Q_loss_total}
\end{equation}
where $\mathcal{L}_{\mathrm{BE}}$ is the soft Bellman error term and $\mathcal{L}_{\mathrm{CQL}}$ is the conservative regularizer.

The soft Bellman error is defined as:
\begin{equation}
    \mathcal{L}_{\mathrm{BE}}(\phi) = \frac{1}{2} \mathbb{E}_{(s,a,s') \sim D} \left[ Q_\phi(s,a) - \left( R_\theta(s,a) + \gamma V_{\phi^-}(s') \right) \right]^2
    \label{eq:bellman_error}
\end{equation}

where $V_{\phi^-}(s')$ is the soft value function defined as:
\begin{equation}
V_{\phi^-}(s') = \log \sum_{a'} \exp(Q_{\phi^-}(s', a'))
\end{equation}
and $Q_{\phi^-}$ represents a target Q-network whose parameters are periodically synchronized with $Q_\phi$ to stabilize training.

To mitigate overestimation and ensure robustness to out-of-distribution (OOD) actions, we incorporate the CQL regularization term:
\begin{align}
    \mathcal{L}_{\mathrm{CQL}}(\phi) 
    &= \alpha( \mathbb{E}_{s \sim D} \biggl[
        \log \sum_{a} \exp(Q_\phi(s,a))\biggr]   \nonumber \\
    &\quad - \mathbb{E}_{a \sim D(s)} \biggl[Q_\phi(s,a)\biggr])
    \label{eq:cql_reg}
\end{align}

Here, $\alpha > 0$ is a hyperparameter controlling the degree of conservatism. The first term serves as an upper bound to discourage high Q-values globally, while the second term encourages high Q-values specifically for actions present in the dataset. Minimizing this term helps ensure that the learned Q-function suppresses inflated estimates for unseen or less frequent actions.

The optimization of $\mathcal{L}_Q(\phi)$ is conducted via stochastic gradient descent (SGD). At each update step, a mini-batch is sampled from the offline dataset $D$, and gradients $\nabla_\phi \mathcal{L}_Q(\phi)$ guide the Q-function parameter updates. The Q-function parameters are updated via gradient descent:
\begin{equation}
    \phi \leftarrow \phi + \eta_Q \nabla_\phi L_Q(\phi)
\end{equation}
where $\eta_Q$ is the learning rate for the Q-function update. We alternate Q-learning updates for multiple gradient steps until it accurately reflects the current reward function.

%%%%%%%%%%%%%%%%%%%%%%%%%%%%%%%%%%%%%%%%%%55
\subsection{Overall Algorithm}
The overall procedure follows a bi-level optimization between conservative Q-function learning and reward maximization steps. The full algorithm is provided in Appendix~\ref{alg:bi_level_irl}.

%%%%%%%%%%%%%%%%%%%%%%%%%%%%%%%%%%%%%%%%%%%%%%%%%%%%%%%%%%%%%%%%%%%%%%%%%%%%%%
\section{Theoretical Analysis and Convergence}
In this section, we provide a theoretical analysis of the proposed algorithm. We establish its convergence properties and demonstrate that the learned reward function ensures the expert's policy is soft-optimal.

\subsection{Convergence of Bi-Level Optimization}
We begin by stating the key assumptions necessary for our analysis.

\begin{assumption}[Lipschitz Continuity \& Boundedness]\label{assump:lipschitz}
\end{assumption}
\begin{enumerate}[label=(\alph*)]
    \item Bounded domains and rewards.  
    The state space $\mathcal{S}$ and action space $\mathcal{A}$ are finite or compact, and the reward function is uniformly bounded:
    \[
        |r(s,a)| \leq R_{\max}, \quad \forall (s,a) \in \mathcal{S} \times \mathcal{A}.
    \]
    This condition ensures that the optimal Q-values are also bounded.

    \item Lipschitz continuity of the reward function.
    The reward function $r_\theta(s,a)$ is $L_r$-Lipschitz continuous with respect to the parameter $\theta$. As a result, the optimal Q-function $Q^*(\theta)$ obtained by solving the lower-level MDP with reward $r_\theta$ is also Lipschitz continuous in $\theta$. That is, there exists a constant $L_Q < \infty$ such that:
    \[
        \| Q^*(\theta_1) - Q^*(\theta_2) \|_\infty \leq L_Q \| \theta_1 - \theta_2 \|, \quad \forall \theta_1, \theta_2.
    \]
    For example, if $r_\theta$ is linear in $\theta$ and the discount factor satisfies $\gamma \in (0,1)$, then we have:
    \[
        L_Q \le \frac{L_r}{1 - \gamma}.
    \]

    \item Lipschitz reward parameter inference.
    The reward parameter update mapping $\theta(Q) := \arg\max_\theta \mathcal{L}(\theta; Q)$ is $L_{\text{ML}}$-Lipschitz continuous with respect to $Q$, i.e.,
    \[
        \| \theta(Q) - \theta(Q') \| \leq L_{\text{ML}} \| Q - Q' \|_\infty.
    \]

    \item Contraction condition. 
    The product of the two Lipschitz constants satisfies:
    \[
        L_Q \cdot L_{\text{ML}} < 1,
    \]
    which ensures that the bi-level update forms a contraction mapping and thus converges.
\end{enumerate}

\begin{lemma}[Contraction of Composite Update]\label{lemma:contraction}
Define the composite update $G: \Theta \to \Theta$ by
\[
G(\theta) := F_{\text{upper}}(F_{\text{lower}}(\theta)),
\]
where $F_{\text{lower}}(\theta)$ returns the optimal $Q$-function for reward $r_\theta$, and $F_{\text{upper}}(Q)$ returns the maximizing reward parameters given $Q$. Under Assumption~\ref{assump:lipschitz}, $G$ is a contraction mapping on the reward parameter space $\Theta$. In particular, for any $\theta, \theta' \in \Theta$,
\[
\|G(\theta) - G(\theta')\| \leq L_{\text{ML}} L_Q \| \theta - \theta' \|,
\]
and since $L_{\text{ML}} L_Q < 1$ by assumption, $G$ has a unique fixed point and the iteration $\theta_{k+1} = G(\theta_k)$ converges to it.
\end{lemma}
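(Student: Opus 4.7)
The plan is to prove the lemma as a short, direct application of the Lipschitz assumptions bundled in Assumption~\ref{assump:lipschitz}, followed by an invocation of the Banach fixed-point theorem. The composite map $G = F_{\text{upper}} \circ F_{\text{lower}}$ is by construction a composition of two Lipschitz maps whose constants are controlled by parts (b) and (c) of the assumption, and part (d) is precisely the condition needed to turn this into a strict contraction. So the structure of the argument is essentially algebraic: bound each piece separately, then multiply.

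First, I would fix arbitrary $\theta, \theta' \in \Theta$ and let $Q := F_{\text{lower}}(\theta)$ and $Q' := F_{\text{lower}}(\theta')$. By Assumption~\ref{assump:lipschitz}(b), the lower-level map satisfies
\[
\| Q - Q' \|_\infty \;=\; \| Q^\ast(\theta) - Q^\ast(\theta') \|_\infty \;\le\; L_Q \, \| \theta - \theta' \|.
\]
Next I would apply Assumption~\ref{assump:lipschitz}(c) to the upper-level map to get
\[
\| F_{\text{upper}}(Q) - F_{\text{upper}}(Q') \| \;\le\; L_{\text{ML}} \, \| Q - Q' \|_\infty.
\]
Chaining these two inequalities gives
\[
\| G(\theta) - G(\theta') \| \;\le\; L_{\text{ML}} L_Q \, \| \theta - \theta' \|,
\]
which is the stated Lipschitz bound. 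Invoking Assumption~\ref{assump:lipschitz}(d), the composite constant $L_{\text{ML}} L_Q$ lies strictly below $1$, so $G$ is a strict contraction on $\Theta$.

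To finish, I would appeal to the Banach fixed-point theorem: provided $\Theta$ is a (nonempty) complete metric space under the norm $\|\cdot\|$, a strict contraction admits a unique fixed point $\theta^\ast$, and the Picard iteration $\theta_{k+1} = G(\theta_k)$ converges to $\theta^\ast$ geometrically from any initialization, with error bound $\|\theta_k - \theta^\ast\| \le (L_{\text{ML}} L_Q)^k \| \theta_0 - \theta^\ast \|$.

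The only genuine subtlety, and where I expect the main (minor) obstacle to lie, is justifying the background hypotheses that make Banach applicable: namely that $\Theta$ is closed under $G$ (so iterates remain in the domain), and that it is complete under the chosen norm. The boundedness clause in Assumption~\ref{assump:lipschitz}(a), together with standard parameterizations of the reward network that restrict $\theta$ to a closed bounded subset of $\mathbb{R}^d$, should suffice; I would note this explicitly rather than leave it implicit. Beyond that caveat, the lemma is a clean composition-of-Lipschitz-maps argument and no further technical machinery is needed.
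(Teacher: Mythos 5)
Your proof is correct and follows essentially the same route as the paper's: chain the $L_Q$-Lipschitz bound on $F_{\text{lower}}$ with the $L_{\text{ML}}$-Lipschitz bound on $F_{\text{upper}}$, multiply the constants, and invoke the contraction principle. Your added remark about needing $\Theta$ to be complete and invariant under $G$ for Banach's theorem to apply is a point the paper leaves implicit, and is worth stating.
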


\begin{theorem}[Convergence to Fixed Point]\label{thm:convergence}
Under Assumption~\ref{assump:lipschitz}, the bi-level optimization procedure converges to a unique fixed point $(\theta^*, Q^*)$. Furthermore, the pair $(\theta^*, Q^*)$ satisfies the bi-level optimality conditions:
\begin{align*}
    &Q^* = Q^*(\theta^*) \quad \text{(lower-level optimality)}, \\
    &\theta^* = \arg\max_{\theta} \mathcal{L}(\theta; Q^*) \quad \text{(upper-level optimality)},
\end{align*}
where $Q^*(\theta^*)$ denotes the conservative $Q$-function solving the lower-level optimization (e.g., Conservative Q-Learning) for reward $r_{\theta^*}$, and $\mathcal{L}(\theta; Q)$ denotes the expert log-likelihood objective. Hence, $(\theta^*, Q^*)$ is the unique bi-level fixed point of the alternating update procedure.
\end{theorem}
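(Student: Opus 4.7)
The plan is to bootstrap the contraction result already proved in Lemma~\ref{lemma:contraction} by Banach's fixed-point theorem, and then to translate the fixed point of the composite map $G$ back into a pair $(\theta^*, Q^*)$ that satisfies both optimality conditions. Concretely, I would equip $\Theta$ with the norm used in Assumption~\ref{assump:lipschitz}, taking it to be a complete metric space (either all of $\mathbb{R}^d$ or a closed invariant subset on which the Lipschitz constants of Assumption~\ref{assump:lipschitz}(b)--(c) are valid). Since Lemma~\ref{lemma:contraction} gives $\|G(\theta)-G(\theta')\|\le L_{\text{ML}}L_Q\|\theta-\theta'\|$ with $L_{\text{ML}}L_Q<1$, Banach's theorem immediately yields a unique $\theta^*\in\Theta$ with $G(\theta^*)=\theta^*$, together with the geometric bound $\|\theta_k-\theta^*\|\le (L_{\text{ML}}L_Q)^k\|\theta_0-\theta^*\|$, which gives convergence of the outer iterates at a linear rate.

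Next I would define $Q^*:=F_{\text{lower}}(\theta^*)$ and verify the two bi-level optimality conditions directly from this construction. The lower-level condition $Q^*=Q^*(\theta^*)$ holds by the very definition of $F_{\text{lower}}$ as the exact CQL solver for reward $r_{\theta^*}$. For the upper-level condition, unpacking the fixed-point equation gives
\[
\theta^* \;=\; G(\theta^*) \;=\; F_{\text{upper}}(F_{\text{lower}}(\theta^*)) \;=\; F_{\text{upper}}(Q^*) \;=\; \arg\max_{\theta}\mathcal{L}(\theta;Q^*),
\]
where the last equality uses the definition of $F_{\text{upper}}$ in Assumption~\ref{assump:lipschitz}(c). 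Uniqueness of the pair $(\theta^*,Q^*)$ then follows from the uniqueness of $\theta^*$ together with the single-valuedness of $F_{\text{lower}}$, and the convergence of the iterates $(\theta_k,Q_k):=(\theta_k,F_{\text{lower}}(\theta_k))$ to $(\theta^*,Q^*)$ follows because $F_{\text{lower}}$ is continuous (in fact $L_Q$-Lipschitz) in $\theta$.

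The main obstacle I expect is not the Banach step itself, which is essentially automatic once Lemma~\ref{lemma:contraction} is granted, but justifying that the idealized composite map $G$ actually models the algorithm as stated. In particular, Assumption~\ref{assump:lipschitz}(c) tacitly requires that $\arg\max_\theta\mathcal{L}(\theta;Q)$ be a singleton and Lipschitz in $Q$; the surrogate soft-advantage regression objective used in the proposed method (equation~\eqref{eq:reward_regression}) must be argued to admit a unique minimizer that is Lipschitz in $Q_\phi$ for the theorem to apply. Similarly, the lower-level CQL step must be interpreted as solved to optimality rather than after a finite number of gradient steps. I would therefore phrase the proof in terms of the exact-solver idealization and flag the inexact version as a natural extension requiring a perturbed-contraction argument (which contributes only an additional additive error term $\tfrac{\varepsilon}{1-L_{\text{ML}}L_Q}$ on the limiting iterate, without affecting the structural conclusion).
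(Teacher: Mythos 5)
Your proposal is correct and follows essentially the same route as the paper's proof: invoke Lemma~\ref{lemma:contraction} and the Banach fixed-point theorem to obtain the unique $\theta^*$, define $Q^* := F_{\text{lower}}(\theta^*)$, unpack $G(\theta^*)=\theta^*$ into the two optimality conditions, and deduce convergence of $(\theta_k, Q_k)$ from the contraction of $G$ and the continuity of $F_{\text{lower}}$. Your added remarks on completeness of $\Theta$, the linear convergence rate, and the gap between the exact-solver idealization and the surrogate regression actually implemented are refinements the paper does not make explicit, but they do not change the argument.
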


\subsection{Expert Policy Optimality}

We now formalize the conditions under which the expert policy is optimal under the learned reward.

\begin{assumption}[Expert Policy and Identifiability]\label{assump:expert}
\end{assumption}
\begin{enumerate}[label=(\alph*)]
    \item Expert as Soft-Optimal Policy.  
    The expert demonstrations are generated by a stochastic policy $\pi_E(a \mid s)$, which we assume follows a soft Boltzmann form:
    \[
        \pi_E(a \mid s) \propto \exp\big(Q_E(s,a)\big),
    \]
    for some unknown expert $Q$-function $Q_E(s,a)$. Equivalently,
    \[
        \pi_E(a \mid s) = \frac{\exp(Q_E(s,a))}{\sum_{a' \in \mathcal{A}} \exp(Q_E(s,a'))}
    \]
    We assume $\pi_E$ has full support over its domain (i.e., any $(s,a)$ pair not visited is outside the expert’s support).

    \item Identifiability of the Reward Function.
    The reward function is parameterized (e.g., linearly) as $r_\theta$, and the expert likelihood objective
    \[
        \mathcal{L}(\theta) = \mathbb{E}_{(s,a) \sim \mathcal{D}_E} \big[ \log \pi_\theta(a \mid s) \big]
    \]
    is concave in $\theta$. Here, $\pi_\theta(a \mid s) \propto \exp(Q_\theta(s,a))$ is the soft-optimal policy induced by reward $r_\theta$. Furthermore, we assume the expert demonstrations are informative enough to uniquely determine $\theta^* = \arg\max_\theta \mathcal{L}(\theta)$ (up to symmetry).
\end{enumerate}

\begin{lemma}[Likelihood Optimality Condition]\label{lemma:likelihood_opt}
\end{lemma}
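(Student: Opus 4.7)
The plan is to establish the claim by invoking first-order optimality of the concave likelihood $\mathcal{L}(\theta)$ and translating the stationarity condition into an equivalence between the induced soft policy $\pi_{\theta^*}$ and the expert policy $\pi_E$. The key observation is that differentiating the soft Boltzmann log-density yields a clean ``policy-gradient style'' expression, after which Assumption~\ref{assump:expert}(b) (concavity plus identifiability) promotes a feature-matching identity into a full policy-matching statement.

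First I would compute $\nabla_\theta \log \pi_\theta(a \mid s)$ by differentiating $Q_\theta(s,a) - \log \sum_{a'} \exp(Q_\theta(s,a'))$; standard manipulation gives
\[
\nabla_\theta \log \pi_\theta(a \mid s) = \nabla_\theta Q_\theta(s,a) - \mathbb{E}_{a' \sim \pi_\theta(\cdot \mid s)}\bigl[\nabla_\theta Q_\theta(s,a')\bigr].
\]
Averaging over expert samples turns $\nabla_\theta \mathcal{L}(\theta)$ into the difference between expected $Q$-gradients under $\pi_E$ and under $\pi_\theta$, so the stationarity equation $\nabla_\theta \mathcal{L}(\theta^*) = 0$ is exactly a $Q$-gradient matching condition on the expert state distribution. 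Concavity of $\mathcal{L}$ makes this a sufficient as well as necessary condition for global optimality, and identifiability from Assumption~\ref{assump:expert}(b) guarantees the maximizer is unique; since $\pi_E$ is itself soft Boltzmann with unknown $Q_E$, the matching identity is realized by $\pi_{\theta^*}(\cdot \mid s) = \pi_E(\cdot \mid s)$ on the expert support, equivalently $Q_{\theta^*} \equiv Q_E$ up to a state-dependent additive constant absorbed by the softmax normalizer.

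The main obstacle I expect is justifying that $\nabla_\theta Q_\theta$ exists and can be interchanged with expectation, since $Q_\theta$ is defined only implicitly as the (soft, conservative) Bellman fixed point of an operator parameterized by $r_\theta$ rather than in closed form. I would discharge this using the $L_Q$-Lipschitzness of $\theta \mapsto Q_\theta$ already supplied by Assumption~\ref{assump:lipschitz}(b), applying the implicit function theorem to the fixed-point equation, or alternatively passing to the discounted-sum representation of $Q_\theta$ and invoking dominated convergence to swap $\nabla_\theta$ with the infinite series. A secondary, more conceptual subtlety is the step from ``expected $Q$-gradients agree under $\pi_E$ and $\pi_{\theta^*}$'' to ``the two policies coincide''; this is precisely what the identifiability clause of Assumption~\ref{assump:expert}(b) is designed to supply, and I would make explicit that the Boltzmann parameterization $\theta \mapsto \pi_\theta$ is injective up to the standard reward-shaping ambiguity, which closes the argument.
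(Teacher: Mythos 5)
There is a genuine gap: your argument terminates at the wrong conclusion. The chain ``stationarity $\Rightarrow$ $Q$-gradient matching $\Rightarrow$ (by concavity and identifiability) $\pi_{\theta^*}(\cdot\mid s) = \pi_E(\cdot\mid s)$ on the expert support'' is a reasonable derivation of \emph{distribution matching}, but Lemma~\ref{lemma:likelihood_opt} claims something different in kind: that every demonstrated action $a_E$ satisfies $Q^*(s,a_E) \ge Q^*(s,a)$ for \emph{all} $a \in \mathcal{A}$. Under your endpoint $Q_{\theta^*} \equiv Q_E$ up to a state-dependent additive constant, a demonstrated action maximizes $Q^*(s,\cdot)$ only if it is the \emph{mode} of $\pi_E(\cdot\mid s)$. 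Assumption~\ref{assump:expert}(a), however, takes $\pi_E$ to be a genuinely soft Boltzmann policy, so the dataset will contain non-modal samples: if $\pi_E(a_1\mid s)=0.7$ and $\pi_E(a_2\mid s)=0.3$, your matching identity forces $Q^*(s,a_2) < Q^*(s,a_1)$, and the claimed inequality fails for the expert pair $(s,a_2)$. As written, the proposal cannot deliver the lemma; you would need an additional step or hypothesis (e.g., a deterministic or near-deterministic expert) to pass from policy matching to per-sample optimality of the demonstrated action.

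The paper reaches the stronger per-action statement by a different mechanism: it differentiates the log-likelihood with respect to the $Q$-values $Q_\theta(s,\cdot)$ treated as free coordinates, obtaining the softmax gradient $1-\pi_\theta(a_E\mid s)$ at $b=a_E$ and $-\pi_\theta(b\mid s)$ otherwise, and reads stationarity as forcing $\pi_{\theta^*}(a_E\mid s)=1$; from there every demonstrated action attains the common maximal value of $Q^*(s,\cdot)$ and strictly dominates undemonstrated ones. That argument has its own weaknesses --- the softmax never attains probability one at finite $Q$, and when several distinct actions are demonstrated at the same state the stationary point matches empirical frequencies rather than collapsing to a point mass --- but it is the step by which the paper obtains the dominance claim, and your proposal does not reproduce it. Your care with the differentiability of the implicitly defined $Q_\theta$ (implicit function theorem on the soft Bellman fixed point, dominated convergence for the discounted sum) addresses a real issue the paper glosses over, but it does not close the gap above.
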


Let $r^*(s,a) = r_{\theta^*}(s,a)$ be the learned reward function. Then for any state $s$ in the support of $\pi_E$, the expert’s action $a_E$ satisfies:
\[
    Q^*(s,a_E) \ge Q^*(s,a), \quad \forall a \in \mathcal{A},
\]
where $Q^*$ is the optimal $Q$-function under $r^*$. That is, the expert action maximizes $Q^*(s,\cdot)$ at each visited state.

\begin{theorem}[Expert Optimality under Learned Reward]\label{thm:expert_optimal}
Under the learned reward function $r^*(s,a)$, the expert policy $\pi_E$ achieves the highest expected return among all policies. That is, $\pi_E$ is an optimal policy for the MDP $\langle \mathcal{S}, \mathcal{A}, P, r^*, \gamma \rangle$, and satisfies:
\[
    J(\pi_E) \ge J(\pi), \quad \forall \pi,
\]
where $J(\pi)$ is the expected discounted return. Equivalently, $V^*(s)$ is achieved by $\pi_E$ at all $s$ visited in the expert data.
\end{theorem}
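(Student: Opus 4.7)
The plan is to bootstrap from Theorem~\ref{thm:convergence} and Lemma~\ref{lemma:likelihood_opt} and then upgrade the pointwise greediness statement to a return-dominance statement via the Bellman optimality equation. First I would invoke Theorem~\ref{thm:convergence} to pin down the fixed point $(\theta^*, Q^*)$, so that $Q^* = Q^*(\theta^*)$ is the conservative soft $Q$-function that the lower level produces under the reward $r^* = r_{\theta^*}$, and $\theta^*$ maximizes $\mathcal{L}(\theta; Q^*)$ at the upper level. This gives the two structural facts I need: (i) $Q^*$ is consistent with the soft Bellman backup under $r^*$, and (ii) the Boltzmann policy induced by $Q^*$ coincides with the expert on the support of $\mathcal{D}_E$, by the identifiability part of Assumption~\ref{assump:expert}(b).

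Second, I would apply Lemma~\ref{lemma:likelihood_opt} to conclude that at every state $s$ in the support of $\pi_E$, the expert action $a_E$ is a maximizer of $Q^*(s,\cdot)$. Combined with the soft Bellman consistency from the first step, this means the soft value $\log\sum_{a'}\exp(Q^*(s,a'))$ and the hard value $\max_a Q^*(s,a)$ agree on the expert support up to a vanishing log-partition term, because the softmax concentrates on $a_E$ whenever the $Q^*$-gap between $a_E$ and suboptimal actions is strictly positive. In other words, the stochastic expert behaves exactly like the greedy policy induced by $Q^*$ along every trajectory it actually generates.

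Third, I would close the argument with a standard Bellman-optimality comparison against an arbitrary competitor policy $\pi$. Because $Q^*$ is (on the expert support) the optimal $Q$-function of the MDP $\langle \mathcal{S},\mathcal{A},P,r^*,\gamma\rangle$, its greedy policy is optimal; and because $\pi_E$ agrees with this greedy policy at every visited $s$, the Bellman optimality equation $V^{\pi_E}(s) = \max_a [r^*(s,a) + \gamma\, \mathbb{E}_{s'\sim P(\cdot\mid s,a)}V^{\pi_E}(s')]$ holds on the expert support. Iterating this backup and invoking a performance-difference argument against $\pi$ then yields $V^{\pi_E}(s) = V^*(s)$ at every expert-visited state and $J(\pi_E) \ge J(\pi)$ for all $\pi$, which is the claim.

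The main obstacle I anticipate is reconciling two mismatches: the conservative $Q^*$ produced by CQL is not literally the optimum of the unregularized Bellman operator for arbitrary out-of-distribution actions, and the soft Boltzmann expert is not literally greedy. I would handle the first mismatch by confining the value comparison to the support of $\mathcal{D}_E$, where the CQL penalty is inactive and $Q^*$ coincides with the true soft-optimal $Q$; and I would handle the second by pushing the residual softness into an entropy term that provably vanishes on the expert support (since the softmax concentrates on $a_E$ by Lemma~\ref{lemma:likelihood_opt}), so the inequality $J(\pi_E) \ge J(\pi)$ transfers cleanly from the soft MDP to the hard MDP.
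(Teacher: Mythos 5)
Your proposal follows essentially the same route as the paper: invoke Lemma~\ref{lemma:likelihood_opt} to conclude that $\pi_E$ is greedy with respect to $Q^*$ on the expert support, then compare against an arbitrary competitor via the Bellman decomposition of $Q^*(s,a_E) - Q^*(s,a')$ to conclude $J(\pi_E) \ge J(\pi)$. The two obstacles you flag --- that the conservative CQL fixed point is not literally the unregularized optimal $Q$-function off-support, and that a Boltzmann expert is not literally greedy --- are genuine, but the paper's own proof simply asserts these identifications without addressing them, so your version is if anything the more careful rendering of the same argument.
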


\begin{proof}
From Lemma~\ref{lemma:likelihood_opt}, $\pi_E$ is greedy with respect to $Q^*$ on the support of the expert data. Consider any other policy $\pi'$. If $\pi'$ selects an action $a'$ such that $Q^*(s,a') < Q^*(s,a_E)$ for some $s$, then $\pi'$ is not greedy and thus suboptimal. Specifically, the value difference at that step is:
\begin{align}
&Q^*(s,a_E) - Q^*(s,a') = \big[r^*(s,a_E) - r^*(s,a')\big] \notag \\
&\quad + \gamma \big[\mathbb{E}_{s'}[V^*(s' \mid s,a_E)] - \mathbb{E}_{s'}[V^*(s' \mid s,a')]\big].
\end{align}
This difference is strictly positive when $Q^*(s,a_E) > Q^*(s,a')$, implying $\pi_E$ yields strictly greater expected return. Therefore, $\pi_E$ is optimal under $r^*$.
\end{proof}

%%%%%%%%%%%%%%%%%%%%%%%%%%%%%%%%%%%%%%%%%%%%%%%%%%%%%%%%%%%%%%%%%%%%%%%%%%%%%
\section{Experiments}
We evaluate our proposed method, BiCQL-ML,on a diverse collection of robotics training tasks in MuJoCo simulator, as well as datasets in D4RL benchmark~\cite{d4rl}. Our goal is to demonstrate that our method can effectively learn reward functions from offline expert demonstrations, and that policies trained with the learned rewards outperform prior imitation learning methods.

\subsection{Experimental Setup}
We conduct experiments on four standard MuJoCo continuous control environments: \texttt{HalfCheetah}, \texttt{Hopper}, \texttt{Walker2d}, and \texttt{Ant}, following the setup and expert demonstrations from GAIL~\cite{Ho2016}. To evaluate the robustness of our approach across varying data regimes, we consider two settings: a low-data regime using only 1 expert trajectory, and a medium-data regime with 10 expert trajectories.

All algorithms are trained under a unified experimental protocol to ensure consistent comparison. Each experiment is conducted with 42 random seeds, and we report the mean and standard deviation of the episodic returns over 10 evaluation episodes, measured every 1000 environment steps. Training and evaluation are performed on a machine equipped with an NVIDIA GeForce RTX 4090 GPU.

\subsection{Evaluation Metrics}
Since our algorithm learns only a reward function from offline expert demonstrations, evaluating its effectiveness requires deriving a policy that optimizes the learned reward. To this end, we train a policy using Soft Actor-Critic (SAC) on top of the fixed reward function. In order to compare with other inverse reinforcement learning approaches under a unified evaluation framework, we deploy the resulting policy in the MuJoCo environment and measure standard metrics such as:

\begin{itemize}
    \item \textbf{Average Return:} Mean episode return of the learned policy, averaged over 100 evaluation episodes at regular intervals.
    \item \textbf{Convergence Speed:} Number of environment steps required to reach 90\% of expert performance.
    \item \textbf{Sample Efficiency:} Total number of environment steps used during both reward learning and policy training.
\end{itemize}
This evaluation protocol serves to validate how well the learned reward captures the expert’s intent, by assessing the quality of policies optimized under that reward in a standard continuous control setting.

\subsection{Comparison with Baselines}

We compare our proposed method, BiCQL-ML, with several established imitation learning and inverse reinforcement learning baselines, including Behavioral Cloning (BC)~\cite{Torabi2018}, Discriminator-Actor-Critic (DAC)~\cite{Dac}, and ValueDICE~\cite{ValueDice}. To investigate the robustness and generalization ability of each method under varying levels of supervision, we conduct evaluations under two distinct data regimes: low-data regime with only 1 expert trajectory and high-data regime with 10 expert trajectories. This separation allows us to assess how well each approach performs when expert demonstrations are scarce versus when more guidance is available.

\subsubsection{Performance under Low-Data Regime} As illustrated in Figure 3. our proposed method, BiCQL-ML, outperforms existing baselines in the low-data regime across three out of four MuJoCo environments, demonstrating both faster convergence and higher final performance. This setting, where only a single expert trajectory is available, poses significant challenges for generalization and reward inference.

Behavioral Cloning (BC) fails to learn meaningful policies in all environments, highlighting its poor generalization ability beyond expert distributions when supervision is minimal. Discriminator-Actor-Critic (DAC) exhibits highly unstable learning in Ant, Hopper, and Walker, with occasional spikes in performance but overall inconsistency. It performs relatively better in HalfCheetah, though still inferior to methods based on reward learning. ValueDICE shows more stable behavior compared to DAC and BC, and achieves performance close to BiCQL-ML in several cases. Notably, in the HalfCheetah environment, ValueDICE slightly outperforms BiCQL-ML, suggesting that distribution-matching alone can be competitive in less complex dynamics. However, in more challenging environments such as Ant and Walker, BiCQL-ML yields significantly better performance, both in terms of return and learning stability.

BiCQL-ML enables more stable and data-efficient learning, particularly under limited supervision, by avoiding the pitfalls of adversarial instability and overfitting to narrow expert distributions.

\subsubsection{Performance under High-Data Regime} As shown in Figure 3, all methods benefit from the increased supervision provided by 10 expert trajectories, showing notable improvements over their performance in the low-data regime. This validates the general trend that richer expert data leads to better learning of reward functions and policies across all algorithms.

Despite these improvements, DAC and BC continue to exhibit significantly lower returns compared to BiCQL-ML and ValueDICE. The adversarial nature of DAC and the supervised nature of BC still suffer from instability and poor generalization, especially in more complex environments like Ant and Walker. ValueDICE narrows the performance gap with BiCQL-ML in several environments, particularly in HalfCheetah and Hopper, where both methods achieve near-optimal returns. However, BiCQL-ML continues to exhibit either higher or comparable final performance across all environments, and importantly, demonstrates more stable and faster convergence—especially in the Walker and Ant tasks. Moreover, BiCQL-ML consistently achieves reliable returns even during early stages of training, indicating enhanced sample efficiency and training stability. 

These results indicate that BiCQL-ML scales effectively with data availability. Its sample-efficient reward learning and regularized Q-function optimization allow it to exploit richer expert supervision without overfitting. The consistent performance across different data regimes suggests strong generalization and robustness properties.

\textbf{Overall}, BiCQL-ML outperforms or matches state-of-the-art IRL baselines in both data-scarce and data-rich settings, making it a reliable choice for real-world offline imitation learning problems.

\begin{figure}
    \centering
    \includegraphics[scale=0.45]{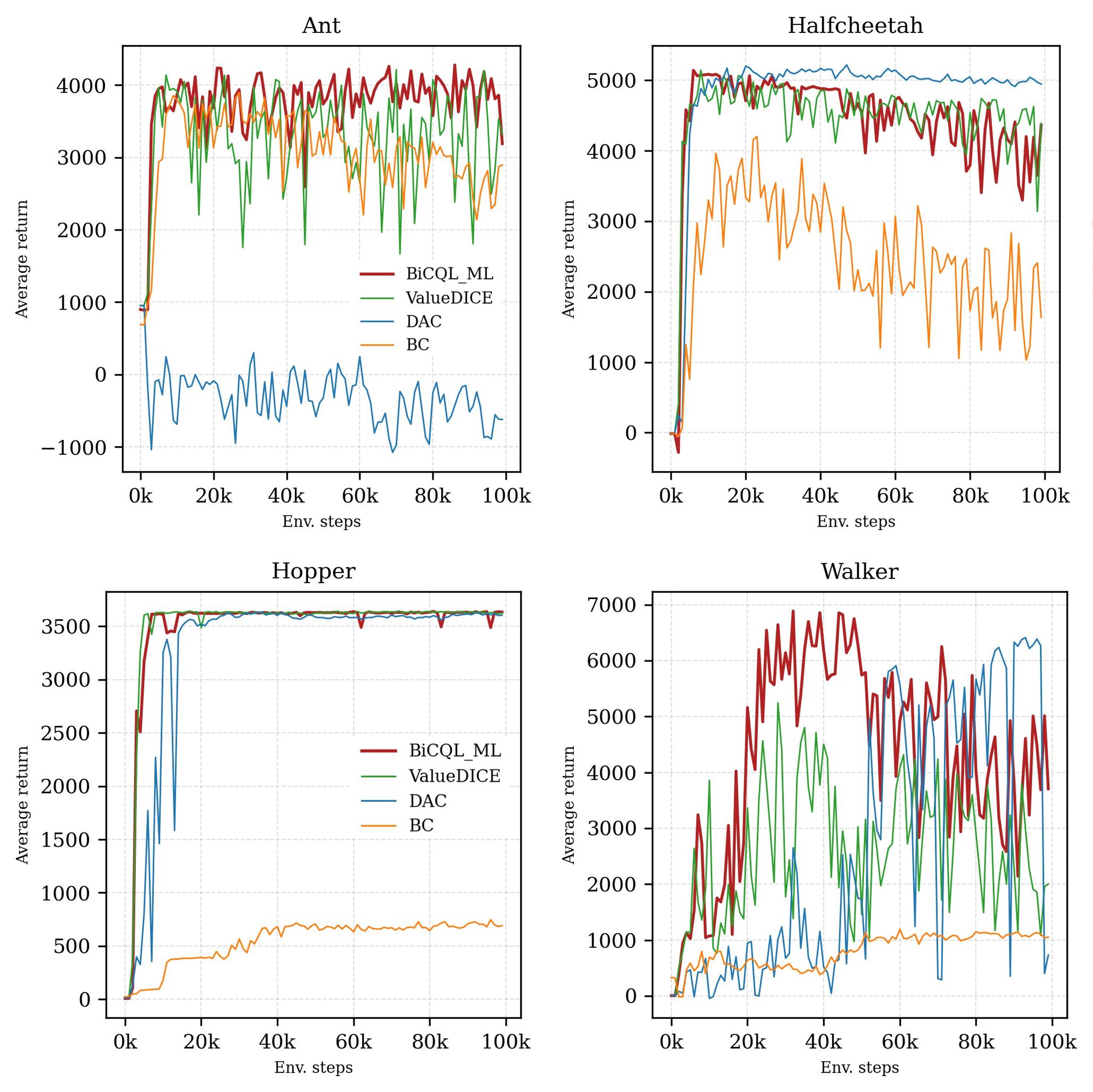}
    \caption{The performance comparison under the low-data regime (single expert demonstration). BiCQL-ML significantly outperforms other baselines across Ant, Halfcheetah, Hopper, and Walker tasks in terms of sample efficiency and final average return.}
\end{figure}

\begin{figure}
    \centering
    \includegraphics[scale=0.45]{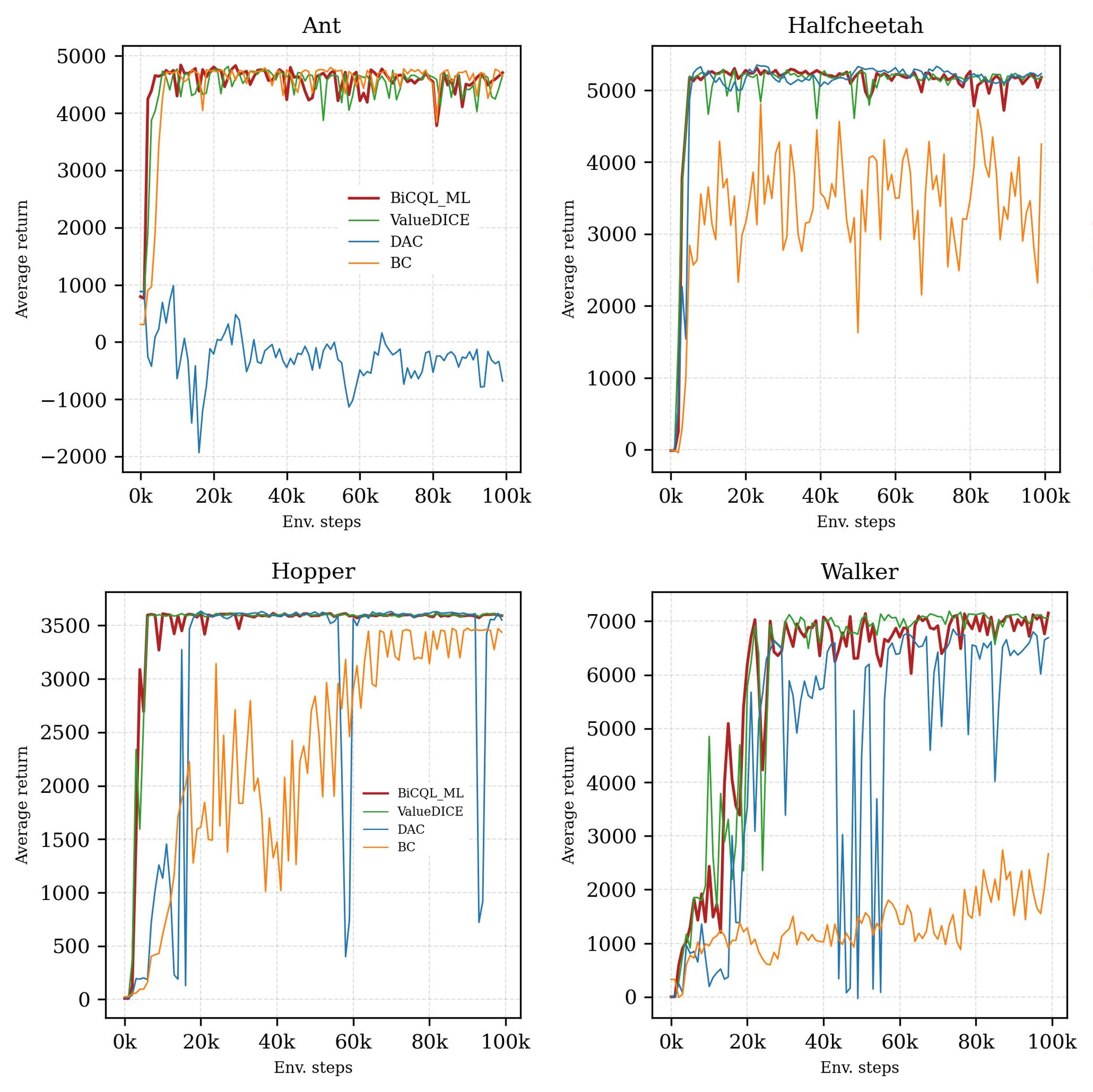}
    \caption{The performance comparison under the high-data regime (10 expert demonstrations). BiCQL-ML achieves consistently high returns and maintains competitive or superior performance to ValueDICE and DAC across all tasks.}
\end{figure}

%%%%%%%%%%%%%%%%%%%%%%%%%%%%%%%%%%%%%%%%%%%%%%%%%%%%%%%%%%%%%%%%%%%%%%%
\section{Conclusion and Future Work}

In this paper, we presented \textit{BiCQL-ML}, a novel offline inverse reinforcement learning (IRL) algorithm that employs a bi-level optimization framework combining Conservative Q-Learning (CQL) and maximum likelihood reward inference. Unlike conventional approaches that rely heavily on explicit policy optimization or adversarial training, our method bypasses these requirements by alternately optimizing a conservative Q-function and a reward function parameterized by maximum likelihood estimation.

Empirical evaluations across diverse MuJoCo tasks in both low-data and high-data regimes demonstrate the robustness and effectiveness of \textit{BiCQL-ML}. Particularly in scenarios with limited expert data, \textit{BiCQL-ML} consistently exhibits superior performance compared to baseline methods such as Behavioral Cloning, DAC, and ValueDICE, achieving higher returns and faster convergence. In data-rich conditions, our method maintains competitive performance, demonstrating improved stability and early-stage reward learning reliability, further underscoring its effectiveness.

For future work, several promising avenues exist. First, integrating adaptive mechanisms for dynamically adjusting conservatism could enhance performance and robustness in varying offline data conditions. Second, exploring theoretical extensions that incorporate additional modalities such as human preference feedback or uncertainty estimates could broaden the applicability of our algorithm in more complex real-world scenarios. Lastly, scaling the framework to more diverse and higher-dimensional tasks beyond MuJoCo environments could further validate and extend the practical utility of our proposed methodology.

%%%%%%%%%%%%%%%%%%%%%%%%%%%%%%%%%%%%%%%%%%%%%%%%%%%%%%%%%%%%%%%%%%%%%%%%%%%%%%%%

%%%%%%%%%%%%%%%%%%%%%%%%%%%%%%%%%%%%%%%%%%%%%%%%%%%%%%%%%%%%%%%%%%%%%%%%%%%%%%%%
\section*{APPENDIX}
\section{Theoretical Analysis: Detailed Proofs}
\label{appendix:theory}

\subsection{Proof of Lemma 1}
\begin{proof}
Let $\theta_k$ and $\theta_k'$ be two different reward parameter iterates at some iteration $k$. After one full alternating update (both lower-level and upper-level), we have:
\[
\theta_{k+1} = G(\theta_k), \quad \theta_{k+1}' = G(\theta_k').
\]
By definition,
\[
\|\theta_{k+1} - \theta_{k+1}'\| = \|F_{\text{upper}}(Q^*(\theta_k)) - F_{\text{upper}}(Q^*(\theta_k'))\|.
\]
By the $L_{\text{ML}}$-Lipschitz continuity of $F_{\text{upper}}$, we have:
\[
\|\theta_{k+1} - \theta_{k+1}'\| \leq L_{\text{ML}} \| Q^*(\theta_k) - Q^*(\theta_k') \|_\infty.
\]
Then, applying the $L_Q$-Lipschitz continuity of $Q^*(\theta)$ from Assumption~\ref{assump:lipschitz} yields:
\[
\|\theta_{k+1} - \theta_{k+1}'\| \leq L_{\text{ML}} L_Q \| \theta_k - \theta_k' \|.
\]
Thus, the update contracts by a factor of at most $L_{\text{ML}} L_Q < 1$ each iteration. By induction, after $n$ steps:
\[
\|\theta_n - \theta_n'\| \leq (L_{\text{ML}} L_Q)^n \|\theta_0 - \theta_0'\|,
\]
which tends to zero as $n \to \infty$. This proves that $G$ is a contraction with ratio $q = L_{\text{ML}} L_Q < 1$, and that the sequence converges to a unique fixed point regardless of initialization.
\end{proof}

\subsection{Proof of Lemma~\ref{lemma:likelihood_opt}}
\begin{proof}
At the optimum $\theta^*$, the gradient of the log-likelihood $\mathcal{L}(\theta)$ vanishes:
\[
    \frac{\partial}{\partial Q_\theta(s,b)} \log \pi_\theta(a_E \mid s)
    =
    \begin{cases}
        1 - \pi_\theta(a_E \mid s), & \text{if } b = a_E, \\[3pt]
        -\,\pi_\theta(b \mid s), & \text{if } b \neq a_E.
    \end{cases}
\]
The stationary condition $\nabla_{Q(s,\cdot)} \mathcal{L}(\theta^*) = 0$ implies:
\[
    \pi_{\theta^*}(a_E \mid s) = 1, \quad \pi_{\theta^*}(b \mid s) = 0 \quad \forall b \neq a_E.
\]
Since $\pi_{\theta^*}(a \mid s) \propto \exp(Q^*(s,a))$, this means:
\[
    Q^*(s,a_E) > Q^*(s,a), \quad \forall a \neq a_E~\text{with}~\pi_E(a \mid s) = 0,
\]
\[
    Q^*(s,a_E) = Q^*(s,a), \quad \forall a \in \mathcal{A}~\text{such that}~\pi_E(a \mid s) > 0.
\]
Hence, the expert’s chosen action $a_E$ maximizes $Q^*(s,a)$ at each state $s$.
\end{proof}

\subsection{Proof of Theorem 1}
\begin{proof}
\textbf{Existence and Uniqueness.}  
By Lemma~\ref{lemma:contraction}, the composite update $G(\theta) := F_{\text{upper}}(F_{\text{lower}}(\theta))$ is a contraction mapping over the complete metric space of parameters $\Theta$. Therefore, by the Banach Fixed-Point Theorem, there exists a unique fixed point $\theta^*$ such that $G(\theta^*) = \theta^*$. Let $Q^* = F_{\text{lower}}(\theta^*)$ be the corresponding conservative $Q$-function. Then, by construction,
\[
F_{\text{upper}}(Q^*) = \theta^*, \quad F_{\text{lower}}(\theta^*) = Q^*.
\]
This implies that
\[
\theta^* = \arg\max_\theta \mathcal{L}(\theta; Q^*), \quad Q^* = \arg\min_Q \mathcal{L_Q}(Q; r_{\theta^*}),
\]
where $\mathcal{L_Q}$ denotes the CQL loss. Hence, $(\theta^*, Q^*)$ satisfies the optimality conditions of fixed points at the bi-level.

\textbf{Convergence of Algorithm.}  
The algorithm alternates between computing:
\[
Q_{k+1} = F_{\text{lower}}(\theta_k), \quad \theta_{k+1} = F_{\text{upper}}(Q_{k+1}).
\]
This can be written as the iteration $\theta_{k+1} = G(\theta_k)$. Since $G$ is a contraction, the sequence $\{\theta_k\}$ converges to $\theta^*$. By continuity of $F_{\text{lower}}$, it follows that $Q_k := F_{\text{lower}}(\theta_k)$ converges to $Q^*$. Therefore, the entire pair $(\theta_k, Q_k)$ converges to $(\theta^*, Q^*)$, completing the proof.
\end{proof}

\subsection{Algorithm Details}

\begin{center}
\begin{minipage}{1.00\textwidth}
\begin{algorithm}[H]
  \caption{BiCQL-ML}
  \label{alg:bi_level_irl}
  \begin{algorithmic}[1]
    \Require Offline dataset $\mathcal{D} = {(s, a, s')}$, subset of expert demonstrations $\mathcal{D}E \subseteq \mathcal{D}$, discount factor $\gamma$, conservatism coefficient $\alpha$, learning rates $\eta_Q, \eta_R$, update frequencies $K_Q, K_R$
    \State Initialize reward network parameters \(\theta\) via Xavier initialization
    \State Initialize  Q-function $Q_\phi$; set target Q-network parameters $\phi^- \leftarrow \phi$
    \While{not converged}
      \Statex \hfill \textit{// Lower-Level: Conservative Q-function update}
      \For{$k = 1, \dots, K_Q$}
        \State Sample mini-batch $\{(s_i,a_i,s'_i)\}_{i=1}^B \sim \mathcal{D}$
        \State Compute soft Bellman targets: $y_i \leftarrow R_\theta(s_i,a_i) + \gamma \log\sum_{a'}\exp(Q_{\phi^-}(s_i', a'))$
        \State Compute Bellman error: $\mathcal{L}_{\mathrm{BE}} \leftarrow \frac{1}{B} \sum_i \left(Q_\phi(s_i,a_i) - y_i\right)^2$
        \State Compute CQL regularizer: 
        \[
          \mathcal{L}_{\mathrm{CQL}} \leftarrow \frac{1}{B} \sum_i \left[\log \sum_a \exp(Q_\phi(s_i,a)) - Q_\phi(s_i,a_i)\right]
        \]
        \State Update Q-function: 
        \[
          \phi \leftarrow \phi - \eta_Q \nabla_\phi \left( \mathcal{L}_{\mathrm{BE}} + \alpha \mathcal{L}_{\mathrm{CQL}} \right)
        \]
      \EndFor
      \State Periodically update target network: $\phi^- \leftarrow \phi$

      \Statex \hfill \textit{// Upper-Level: Reward function update}
      \For{$k = 1, \dots, K_R$}
        \State Sample mini-batch $\{(s_i,a_i,s'_i)\}_{i=1}^B \sim \mathcal{D}_E$
        \State Compute soft advantage regression loss:
        \[
            \mathcal{L}_r(\theta) \leftarrow \frac{1}{B} \sum_{i=1}^B \left( r_\theta(s_i,a_i) - \left( Q_\phi(s_i,a_i) - \gamma \log \sum_{a'} \exp(Q_\phi(s'_i, a')) \right) \right)^2
        \]
        \State Update reward parameters: 
        \[
          \theta \leftarrow \theta - \eta_r \nabla_\theta \mathcal{L}_r(\theta)
        \]
      \EndFor
    \EndWhile
    \State \Return $R_\theta$ and $Q_\phi$
  \end{algorithmic}
\end{algorithm}
\end{minipage}
\end{center}


\begin{thebibliography}{11}

\bibitem{Abbeel2004}
P. Abbeel and A. Y. Ng,  “Apprenticeship learning via inverse reinforcement learning,”  in \textit{Proc. ICML}, 2004, pp. 1–8.

\bibitem{Ziebart2008}
B. D. Ziebart, A. Maas, J. A. Bagnell, and A. K. Dey, “Maximum entropy inverse reinforcement learning,” in \textit{Proc. AAAI}, 2008, pp. 1433–1438.

\bibitem{Dac}
I. Kostrikov, D. Yarats, B. Zhou, H. Danihelka, A. Fergus, and V. Kumar,  
“Discriminator-actor-critic: Addressing sample inefficiency and reward bias in adversarial imitation learning,”  
in \textit{Proc. ICLR}, 2020.

\bibitem{Rath2021}
A. Raghunathan, C. Finn, and P. Liang,  
“Offline inverse reinforcement learning via value gradient regularization,”  
in \textit{Proc. NeurIPS}, 2021.

\bibitem{Zeng2023}
S. Zeng, C. Li, A. Garcia, and M. Hong, “When demonstrations meet generative world models: A maximum likelihood framework for offline inverse reinforcement learning,” in \textit{Advances in Neural Information Processing Systems (NeurIPS)}, 2023.


\bibitem{Fu2018}
J. Fu, K. Luo, and S. Levine, “Learning robust rewards with adversarial inverse reinforcement learning,” in \textit{Proc. Int. Conf. Learn. Representations (ICLR)}, 2018.

\bibitem{Torabi2018}
F. Torabi, G. Warnell, and P. Stone, “Behavioral Cloning from Observation,” in \textit{Proc. Int. Joint Conf. Artif. Intell. (IJCAI)}, 2018.

\bibitem{ValueDice}
I. Kostrikov, A. Nachum, and J. Tompson, “Imitation Learning via Off-Policy Distribution Matching,” in \textit{Adv. Neural Inf. Process. Syst. (NeurIPS)}, 2020.


\bibitem{Ho2016}
J. Ho and S. Ermon, “Generative adversarial imitation learning,” in \textit{Adv. Neural Inf. Process. Syst. (NeurIPS)}, 2016, pp. 4565–4573.

\bibitem{Kumar2020}
A. Kumar, A. Zhou, G. Tucker, and S. Levine, “Conservative Q-learning for offline reinforcement learning,” in \textit{Adv. Neural Inf. Process. Syst. (NeurIPS)}, vol. 33, 2020, pp. 1179–1191.

\bibitem{Chen2021}
L. Chen, K. Lu, A. Rajeswaran, K. Lee, A. Grover, and I. Mordatch, “Decision transformer: Reinforcement learning via sequence modeling,” in \textit{Adv. Neural Inf. Process. Syst. (NeurIPS)}, vol. 34, 2021, pp. 15084–15097.

\bibitem{Kostrikov2022}
I. Kostrikov, D. Yarats, and R. Fergus, “Offline reinforcement learning with implicit Q-learning,” in \textit{Proc. Int. Conf. Learn. Representations (ICLR)}, 2022.

\bibitem{Kidambi2020}
R. Kidambi, A. Rajeswaran, P. Netrapalli, and T. Joachims, “MOReL: Model-based offline reinforcement learning,” in \textit{Adv. Neural Inf. Process. Syst. (NeurIPS)}, vol. 33, 2020, pp. 21810–21823.

\bibitem{d4rl}
 J. Fu, A. Kumar, O. Nachum, G. Tucker, and S. Levine, “D4rl: Datasets for deep data-driven reinforcement learning,” arXiv preprint arXiv:2004.07219, 2020.

\bibitem{Kim2021}
S. Kim, B. Kang, S. Yoo, and J. Lee, “DWBC: Distribution-weighted behavior cloning,” in \textit{Proc. Int. Conf. Mach. Learn. (ICML)}, 2021, pp. 5562–5571.

\bibitem{Jarrett2021}
D. Jarrett, A. Kasyanov, K. Chua, and C. Finn, “CLARE: Conservative learning for offline reinforcement learning,” in \textit{Adv. Neural Inf. Process. Syst. (NeurIPS)}, vol. 34, 2021, pp. 15629–15641.

\bibitem{moon1996em}
T. K. Moon, ``The expectation-maximization algorithm,'' \textit{IEEE Signal Processing Magazine}, vol. 13, no. 6, pp. 47--60, Nov. 1996, doi: 10.1109/79.543975.

\end{thebibliography}
\end{document}